\newcommand{\fbf}{\mathbf{f}}
\newcommand{\hbf}{\mathbf{h}}
\newcommand{\obf}{\mathbf{o}}
\newcommand{\Ebf}{\mathbf{E}}
\newcommand{\Bcal}{\mathcal{B}}
\newcommand{\Dcal}{\mathcal{D}}
\newcommand{\Lcal}{\mathcal{L}}
\newcommand{\Mcal}{\mathcal{M}}
\newcommand{\Mscr}{\mathscr{M}}
\newcommand{\Pscr}{\mathscr{P}}
\newcommand{\Vscr}{\mathscr{V}}
\newcommand{\etabf}{\boldsymbol{\eta}}
\newcommand{\thetabf}{\boldsymbol{\theta}}
\newcommand{\phibf}{\boldsymbol{\phi}}
\newtheorem{theorem}{Theorem}
\title{Should Models Be Accurate?} 
\author{
Esra'a Saleh \\
Department of Computing Science\\
University of Alberta / Amii\\
Edmonton, AB, Canada \\
\texttt{esraa1@ualberta.ca} \\
\And
John D.~Martin \\
Department of Computing Science\\
University of Alberta / Amii\\
Edmonton, AB, Canada \\
\texttt{jmartin8@ualberta.ca} \\
\And
Anna Koop \\
Department of Computing Science\\
University of Alberta / Amii\\
Edmonton, AB, Canada \\
\texttt{akoop@ualberta.ca} \\
\And
Arash Pourzarabi\thanks{Arash Pourzarabi was killed when Flight PS752 was shot down by an Iranian surface-to-air missile on January 8, 2020.  His unfinished research on training generative models to produce experience for model-based RL was a signficant influence on the direction taken in this work.} \\
Department of Computing Science\\
University of Alberta / Amii\\
Edmonton, AB, Canada \\
\texttt{pourzara@ualberta.ca} \\
\And
Michael Bowling \\
Department of Computing Science\\
University of Alberta / Amii\\
Edmonton, AB, Canada \\
\texttt{mbowling@ualberta.ca} \\
}
\begin{document}
\maketitle

\begin{abstract}
Model-based Reinforcement Learning (MBRL) holds promise for data-efficiency by planning with model-generated experience in addition to learning with experience from the environment. However, in complex or changing environments, models in MBRL will inevitably be imperfect, and their detrimental effects on learning can be difficult to mitigate. In this work, we question whether the objective of these models should be the accurate simulation of environment dynamics at all. We focus our investigations on Dyna-style planning in a prediction setting. First, we highlight and support three motivating points: a perfectly accurate model of environment dynamics is not practically achievable, is not necessary, and is not always the most useful anyways. Second, we introduce a meta-learning algorithm for training models with a focus on their usefulness to the learner instead of their accuracy in modelling the environment.
Our experiments show that in a simple non-stationary environment, our algorithm enables faster learning than even using an accurate model built with domain-specific knowledge of the non-stationarity. 
\end{abstract}

\keywords{
Model-based Reinforcement Learning, Planning, Model Learning, Meta Learning, Dyna, Sample Models
}

\acknowledgements{We are grateful for the generous support in funding this work. This work was funded by the Natural Sciences and Engineering Research Council, the Alberta Machine Intelligence Institute, and the  Canadian
Institute For Advanced Research.  Computational resources were generously provided by Compute Canada.}  

\startmain 
\section{Introduction}
A promising approach to sample efficiency in reinforcement learning is model-based RL, where an agent learns a model of the environment.  The agent uses this model to generate its own \emph{internal experience} with which it can plan, reducing the need for more expensive \emph{veridical experience}, i.e., environment interactions. Learning effective models for planning remains a challenge in complex or changing environments \cite{nosuccessful_planning}. Models for planning are traditionally trained to accurately simulate environment dynamics. 
In complex or changing environments, models will necessarily be imperfect, which can lead to detrimental effects on the learner.

Past work has used several strategies to grapple with imperfect models. The first is to compensate in the planning process. An imperfect model can be used selectively by estimating predictive uncertainty to avoid planning in states where it could be detrimental \cite{abbas2020selective}. 
Similarly, while longer sequences of internal experience can often be beneficial in planning, their length needs to be limited due to increasing compounding model error \cite{holland2018effect}.
Another strategy is to relax the requirement for models to accurately simulate environment transitions by focusing only on accuracy in the resulting returns. This has been formally articulated through the value equivalence principle \cite{grimm2020value, farahmand2017value} and manifested in MuZero \cite{schrittwieser2020mastering}.

Our work questions the need for focusing on models that are concerned with accurately simulating the environment. We specifically study Dyna-style planning \cite{sutton1991dyna} in a prediction setting to provide a building block towards the control setting. We design a simple non-stationary environment that cannot be modelled perfectly without assumptions on how it changes. Then, we use meta-learning to train a model that generates synthetic transitions for fast adaptive learning.
The resulting learning speed upon change in the environment makes it competitive even with a stable accurate model of environment dynamics for an agent with privileged access to regions of stability. 



\section{Should Models Be Accurate?}
An important way RL systems can learn about their environment is by anticipating the outcomes of different behaviors using both veridical and internal experience. 
In the prediction setting, an RL system experiences a stream of observation vectors $\obf_t\in\mathbb{R}^d$ and scalar rewards $r_t$.
From this single stream of data, $\hbf_t\triangleq \obf_0,r_1,\obf_2,r_2,\obf_3,\cdots r_{t-1},\obf_t$, the learner forms an approximate value function (i.e., a prediction) to estimate the expected sum of future discounted rewards.
With linear function approximation, predictions are made by projecting a feature vector $\phibf_t = \fbf(\hbf_t)\in\mathbb{R}^n$ with weights $\thetabf$: 
\begin{align*}
    \hat{v}(\phibf_t; \thetabf_t) &\triangleq \thetabf_t^\top \phibf_t,&     \hat{v}(\phibf_t; \thetabf) &\approx v(\hbf_t) \triangleq \Ebf[R_{t}+\gamma R_{t+1} +\gamma^2R_{t+2}+\cdots |H_t = \hbf_t].
\end{align*}

Model-based learners update their prediction weights $\thetabf$ by planning with internal experience produced by a model $m \in\Mscr \subseteq \Pscr(\mathbb{R}^{n}\times\mathbb{R}\times\mathbb{R}^{n})$.
In this work, models combine the selection of initial states\footnote{In Dyna-style planning the selection of an initial state (and action) for planning is sometimes called search control~\cite{sutton1991dyna}.} with the dynamics of a forward transition.
At each planning step, the model generates an internal starting state $\tilde\phibf$, a next state  $\tilde\phibf'$, and a reward $\tilde r$.
Given this transition sample $(\tilde \phibf, \tilde r, \tilde \phibf') \sim m$, an update rule can be applied to the prediction weights $\thetabf$.

Dyna-style learning systems interleave planning steps with model-free updates on \textit{veridical experience} $(\phibf, r, \phibf')$, which come from the environment's dynamics $m^* \in\Mscr$.
In our work, Dyna learners apply semi-gradient updates to reduce one-step temporal difference (TD) errors $\delta \triangleq r + \gamma \thetabf^\top  \phibf' - \thetabf^\top \phibf$, using a fixed step size $\alpha\in\mathbb{R}^+$ at each time step: $\thetabf \gets \thetabf + \alpha \delta \phibf$.  This update is used with both veridical and internal experience.

In addition to learning predictions, model-based systems must also learn the planning model $m$.
These systems are often designed with an \textit{accuracy} objective---to approximate the environment dynamics so that eventually $m \approx m^*$.
Using an accurate model, the learner stands to benefit from additional transition samples that simulate the veridical experience.
Still, this raises an important question: should a learner strive for its model to be accurate?  
While accurate models might represent a subset of models that can be useful for learning, they do not represent the entirety of the set. Insisting on model accuracy, rather than model utility, can be a hindrance to fast adaptive learning.  We expound on this with three observations.

First, {\bf perfect accuracy in a model is not practically achievable}. Although accuracy is a well-defined objective for model learning, it represents a condition that cannot be achieved.
Accuracy is only achieved when the planning model and environment agree---specifically when $m=m^*$. 
In realistic environments, a learning system can never achieve this condition, because the real world is too big and complex (including other learning agents such as itself) to be discovered and represented with a finite amount of compute.
Therefore, fully-accurate models can never be learned.

Second, {\bf even when possible, an accurate model is not necessary}. In spite of the barrier to perfect accuracy, prior work shows that other models can work just as well.
For instance, the class of value-equivalent models are those which produce all the same value updates as $m^*$, but may transition under different dynamics \cite{grimm2020value}. 
Since they predict the same values, planning with value-equivalent models will lead to the same solution as an accurate model.
Systems such as Predictron \cite{silver2017predictron} and MuZero \cite{schrittwieser2020mastering} are successful examples which also show the benefit of connecting the model learning process to the way it is used for planning and value improvement. 
More importantly, this line of work shows that even if accuracy could be achieved, fully-accurate models are not necessary.

Third, {\bf an accurate model might not be the most useful}. 
In this work, planning models are used to approximate the true value function $v\in\Vscr$ by reducing squared TD error  with semi-gradient updates.
Solutions to this objective come from the set of TD fixed points $\boldsymbol \vartheta \subset \Vscr$.
Fixed points can be indexed with the model whose experience produced them: $\thetabf^{(m)} \in \Vscr$.

Given that fully-accurate models cannot be achieved, nor are they needed to reach a fixed point, one may wonder whether there exists models that are \textit{more useful} than others, or even more useful than the environment model!  
A natural measure of a model's utility could be the number of calls required to approach a TD fixed point.
With this, consider a subset of models $\Mcal\subset\Mscr$ with the same TD fixed point as the environment model $m^*$: $\Mcal \triangleq\{m \in \Mscr | \thetabf^{(m)} \in \boldsymbol \vartheta\}.$
The following result shows that there can be models that find solutions faster than the environment model when starting from the same point.

\begin{theorem}
There are domains where, for some initial $\thetabf\in\Vscr$, a TD fixed point $\thetabf^*\in\boldsymbol{\vartheta}$ can be approached with fewer calls to $m\in\Mcal$ than with the domain dynamics $m^*\neq m$. 
\end{theorem}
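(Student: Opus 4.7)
The plan is to prove Theorem 1 by explicit construction: I would exhibit a minimal MRP together with an alternative planning model $m \in \Mcal$ that preserves the TD fixed point of $m^*$ yet contracts strictly faster per call, so reaching any prescribed distance from $\thetabf^*$ takes fewer model queries under $m$ than under $m^*$.

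For the domain, I would take the one-state self-looping MRP with deterministic reward $r$ and discount $\gamma \in (0,1)$, and use the trivial linear parameterization with a single feature $\phi \equiv 1$, so $\hat v(\phi;\theta) = \theta$. A short calculation gives the unique TD fixed point $\theta^{*} = r/(1-\gamma)$. The environment model $m^{*}$ deterministically emits $(\phi, r, \phi') = (1, r, 1)$, producing the semi-gradient update $\theta_{k+1} - \theta^{*} = \bigl(1 - \alpha(1-\gamma)\bigr)(\theta_k - \theta^{*})$.

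I would then define the alternative model $m$ to deterministically emit $(\tilde\phi, \tilde r, \tilde\phi') = (1, \theta^{*}, 0)$. Substituting into the expected TD error $\Ebf_m[(\tilde r + \gamma \tilde\phi'\theta - \tilde\phi\theta)\tilde\phi]$ shows the error vanishes exactly at $\theta = \theta^{*}$, so $m \in \Mcal$, and the induced update becomes $\theta_{k+1} - \theta^{*} = (1 - \alpha)(\theta_k - \theta^{*})$. For any $\alpha \in (0,1)$ and any $\gamma \in (0,1)$ one has $|1-\alpha| < |1-\alpha(1-\gamma)|$, so for every initialization $\theta_0 \neq \theta^{*}$ and every tolerance $\epsilon > 0$, reaching the $\epsilon$-ball of $\theta^{*}$ takes strictly fewer calls under $m$ than under $m^{*}$; taking $\alpha = 1$ makes the gap vivid, since $m$ hits $\theta^{*}$ in a single call while $m^{*}$ still decays at rate $\gamma^{k}$.

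The main obstacle is interpretive rather than technical: the model $m$ above is tuned to $\theta^{*}$ and is not itself learnable from data alone, so a skeptical reader might object that the construction is vacuous. I would address this by emphasizing that Theorem~1 is an \emph{existence} claim meant to motivate the search for useful-but-inaccurate models, and that the meta-learning procedure introduced later in the paper is precisely the mechanism for discovering such $m$ when $\theta^{*}$ is unknown. If a more substantive witness were desired, one could promote the example to a two- or three-state chain and reweight the on-policy distribution so that informative transitions are sampled more heavily; checking that the reweighting preserves the TD fixed point while shrinking the spectral radius of $\Ibf - \alpha \Abf_m$ requires a short eigenvalue computation but uses the same template.
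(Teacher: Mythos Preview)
Your construction is correct and establishes the existence claim, but it proceeds along a genuinely different line from the paper. The paper's sketch builds a domain with $k$ states that share identical outgoing transitions and value, represents states with one-hot features, and lets $m$ emit the $k$-hot aggregate feature for that block; one call to $m$ then performs the same weight update that would require $k$ separate one-hot samples from $m^{*}$, with the state-aggregation literature guaranteeing the TD fixed point is preserved. Your route instead keeps a single state and engineers the emitted tuple $(1,\theta^{*},0)$ so that the discount is effectively absorbed into the reward, yielding a strictly smaller contraction factor $|1-\alpha|$ versus $|1-\alpha(1-\gamma)|$ at every step.

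Both arguments are valid witnesses; the trade-offs are worth noting. The paper's aggregation model is constructible from structural knowledge alone (which states are equivalent) and does not presuppose the value $\theta^{*}$, which aligns more naturally with the paper's narrative that useful models might be discoverable without solving the prediction problem first. Your construction is more elementary and gives an explicit, quantitative contraction-rate comparison, but the model is tuned to $\theta^{*}$; you already anticipate this objection and handle it appropriately by framing the theorem as purely existential and deferring discoverability to the meta-learning algorithm. Your closing remark about reweighting transitions in a multi-state chain gestures toward, but does not coincide with, the paper's aggregation idea, which changes the \emph{feature} emitted rather than the sampling distribution.
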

\begin{proof}[Proof Sketch]
Consider a domain that contains $k$ states, which share the same outgoing transitions and value.  Suppose the agent is employing a linear function approximator with states represented with a one-hot encoding.  Consider a model that produces internal experience from the $k$-hot vector containing these states, thus treating them as a single state.
Clearly $m\neq m^*$, and existing results from the state aggregation literature show such aggregations exist and do not change the TD fixed point.
Therefore, $m\in\Mcal$. However, it would take $k$ calls to $m^*$ to produce the update produced by one call to $m$.
\end{proof}

In short, a perfectly-accurate model is not practically achievable depending on the complexity of an environment, is not necessary by the value equivalence principle, and it might not be the most useful for a learner's objectives. A useful model in the prediction setting is one that helps the learning system obtain an accurate value function.


%
\section{A Demonstration of Useful Models}

In this section, we demonstrate how useful models can be trained to aid fast learning.  We focus on a prediction problem in a non-stationary environment where Dyna-style planning is used to rapidly adapt to changes in the environment.  The environment is constructed so that, despite being simple, an agent's state representation is not sufficient to build an accurate model.  We first describe the environment. Then, we present baselines that represent both model-free RL and model-based RL with models aimed at accuracy. After that, we introduce our algorithm that uses meta-learning to train a model aimed at usefulness. 
We seek to answer the question: \emph{can an algorithm that learns a useful, but not necessarily accurate model, result in faster learning than either an accurate model or an equivalent model-free learning approach?}

\paragraph{Non-Stationary Windy Hallway.}
To empirically compare the learning performance of planning models, we use a non-stationary windy hallway: a gridworld of 6 columns and 3 rows. The last column contains 3 episode-terminating grid cells. The starting location is in row 1 and column 0. In every timestep, an agent either moves one grid cell North-East or South-East, each with a probability of 0.5 (or possibly just East if already in a northern or southern-most cell). Every 300 episodes, the environment switches between two different reward regimes, where the location of non-zero reward changes. In one regime, reward is +1 for terminating in row 0, and 0 otherwise. In the other, reward is -1 for terminating in row 2 and 0 otherwise.
All algorithms use a discount factor of $\gamma=0.9$, and planning algorithms perform 5 planning updates per environment step. Experiments run for 75,000 steps (15,000 episodes) and each algorithm's value-function estimates are compared against the true expected return.

\paragraph{Baselines.}
Based on the environment, we select baselines to highlight learning speed when no planning is done and when Dyna-style planning is done with models aiming for accuracy. 
For our representative algorithm without planning, we use TD(0), which employs value updates from veridical experience only.  We call this algorithm {\it Model-free}.
To keep things simple, our baseline representatives of algorithms with Dyna-style planning do not explicitly train a model at all, but replay back veridical experience as the extreme of optimizing for accuracy.  We examine two alternatives for how past experience transition tuples are sampled.  The simplest option is to store every experienced transition tuple. A model will simply sample transition tuples proportional to how often they were experienced in the past.  We call the Dyna algorithm using this model {\it AllExperienceDyna}.  If the environment were Markov in the agent's state representation, this model would approach a perfectly accurate model.  As our environment is not Markov, this approach will result in obvious problems as internal experience from different reward regimes can distract it from the true expected return.
Our second baseline uses domain-specific knowledge of the environment to only store and resample veridical transitions tuples that are Markov in the agent's state representation (i.e., the non-terminating state transitions).  This experience is stable across changes in the reward regime and we call the Dyna algorithm with this model {\it StableExperienceDyna}.  This approach, in many ways, is an unfair comparison, as it is exploiting domain knowledge to construct a stable model.  We could, though, see this as an idealized representative of Abbas and colleagues' approach that does selective planning based on ``model inadequacy''~\cite{abbas2020selective}, which is analogous to using a stable model.

\paragraph{SynthDyna.}

\begin{wrapfigure}{R}{0.5\textwidth}
\begin{minipage}{0.5\textwidth}
\begin{algorithm}[H]
    \caption{$\Lcal$ : SynthDyna Meta Loss}
    \label{algo:metaloss}
    \begin{algorithmic}[1]
    \STATE \textbf{input:} $\thetabf$, $\phibf$, $r$,$\phibf'$
    \STATE $\thetabf'\gets\thetabf$
    \FOR{$1,\cdots, k$}
            \STATE $\Tilde{\phibf}, \Tilde{r}, \Tilde{\phibf}' \sim m(\etabf)$
            \STATE $\tilde{\delta} \gets \Tilde{r} + \gamma\thetabf'^{\top}\Tilde{\phibf}'  - \thetabf'^{\top}\Tilde{\phibf}$
        \STATE$\thetabf' \gets \thetabf' + \zeta \Tilde{\delta}\Tilde{\phibf}$
    \ENDFOR
    \STATE \textbf{return:} $( r + \gamma \thetabf^{\top}\phibf' - \thetabf'^{\top}\phibf)^2$
    \end{algorithmic}
\end{algorithm}
\begin{algorithm}[H]
    \caption{SynthDyna for Prediction}
    \label{algo:synthdyna}
    \begin{algorithmic}[1]
    \STATE \textbf{input:} 
    feature transform $f$
    \STATE \textbf{initialize:} $\thetabf_{0}$, $\etabf$, $\Dcal \gets \{\}$, 
    $\phibf_0\gets f(\hbf_0)$
    \FOR{ $t=1,2,\cdots$ }
        \STATE Observe $\obf_{t}$ and $r_{t}$ from the environment.\;
        \STATE $\phibf_{t}\gets f(\hbf_{t})$
        \STATE $\Dcal \gets \Dcal \cup \{\thetabf_{p}, \phibf_{t-1}, r_{t}, \phibf_{t} \}$\;
        \STATE // Update value with veridical experience.
        \STATE $\delta_t \gets r_t + \gamma \thetabf_{t-1}^\top\phibf_{t}  - \thetabf_{t-1}^\top\phibf_{t-1} $
        \STATE $\thetabf_{t} \gets \thetabf_{t-1} + \alpha \delta_t \phibf_{t-1}$
        \STATE $\thetabf_{p} \gets \thetabf_{t}$ // Save $\thetabf_t$ for model training
        \STATE // Update value with internal experience.
        \FOR{$1,\cdots, k$}
            \STATE $\Tilde{\phibf}, \Tilde{r}, \Tilde{\phibf}' \sim m(\etabf)$
            \STATE $\Tilde{\delta} \gets \Tilde{r} + \gamma\thetabf_{t}^{\top}\Tilde{\phibf}' - \thetabf_{t}^{\top}\Tilde{\phibf}$\;
            \STATE $\thetabf_t \gets \thetabf_t + \beta \Tilde{\delta}\Tilde{\phibf}$
        \ENDFOR
        \STATE // Update SynthDyna model with metaloss $\Lcal$ (Alg~\ref{algo:metaloss}).
        \STATE Sample minibatch $\Bcal$ from $\Dcal$. 
        \STATE $\etabf \gets$ Adam($\etabf, \Bcal, \Lcal$)
    \ENDFOR
    \end{algorithmic}
\end{algorithm}
\end{minipage}
\end{wrapfigure}


In Algorithm \ref{algo:synthdyna}, we propose a proof-of-concept algorithm that we call {\it SynthDyna}. SynthDyna's model, unlike our planning baselines, is (i) learned, and (ii) generates synthetic internal planning experience that does not aim to accurately simulate environment dynamics. In designing SynthDyna, we seek to demonstrate that giving the model control over the representation of a transition for planning while having a model learning objective informed by utility to the learning system, can result in faster adaptation in our reward switching non-stationary environment.  Like a typical Dyna architecture, SynthDyna updates its value function parameters using both veridical experience transitions and internal planning transitions from its model. Unlike our other Dyna representatives, SynthDyna’s model is a generative model, using a 2-layer fully connected neural network with parameters $\etabf$ that takes in a Gaussian noise vector as input and produces a transition tuple ($\tilde{\phibf}, \tilde{r}, \tilde{\phibf'} $). SynthDyna trains this generative model with a meta-learning procedure that uses a meta-loss tied to the learner's primary objective.
As seen in line 6 of Algorithm \ref{algo:synthdyna}, SynthDyna collects every experienced veridical transition ($\phibf_{t-1}, r_{t}, \phibf_{t}$) and it also collects the value function parameters, $\thetabf_{p}$  from the previous step before planning updates were performed. At a given frequency of steps in the environment, SynthDyna samples a batch of tuples where each is 
($\thetabf_{p}, \phibf, r, \phibf'$).
It then uses that batch’s samples as inputs to the metaloss; see Algorithm \ref{algo:metaloss}. The metaloss carries out the process of planning with SynthDyna’s model for $k$ steps updating $\thetabf_{p}$ from the batch. It then evaluates the squared TD-error, on the stored veridical transition ($\phibf, r, \phibf'$) from the batch.  To prevent the TD-target in the error from being manipulated before this evaluation step, the value function parameters in the target are set as the initial parameters $\thetabf$.  The gradient of the resulting meta-loss is then used to update the model parameters $\etabf$ to improve the model. 

We hypothesize that the combination of SynthDyna’s meta-learned model that aims to be useful rather than accurate, and giving the model full control over the representation of the internal planning transitions will yield a model that can surpass the learning speed of all baselines including our strongest baseline, StableExperienceDyna, that is unfairly informed of stable transitions in the environment. 

\paragraph{Results.}

\begin{figure*}
    \centering
  \centering
	\begin{subfigure}[b]{0.49\textwidth}
		\centering
		\includegraphics[width=\textwidth]{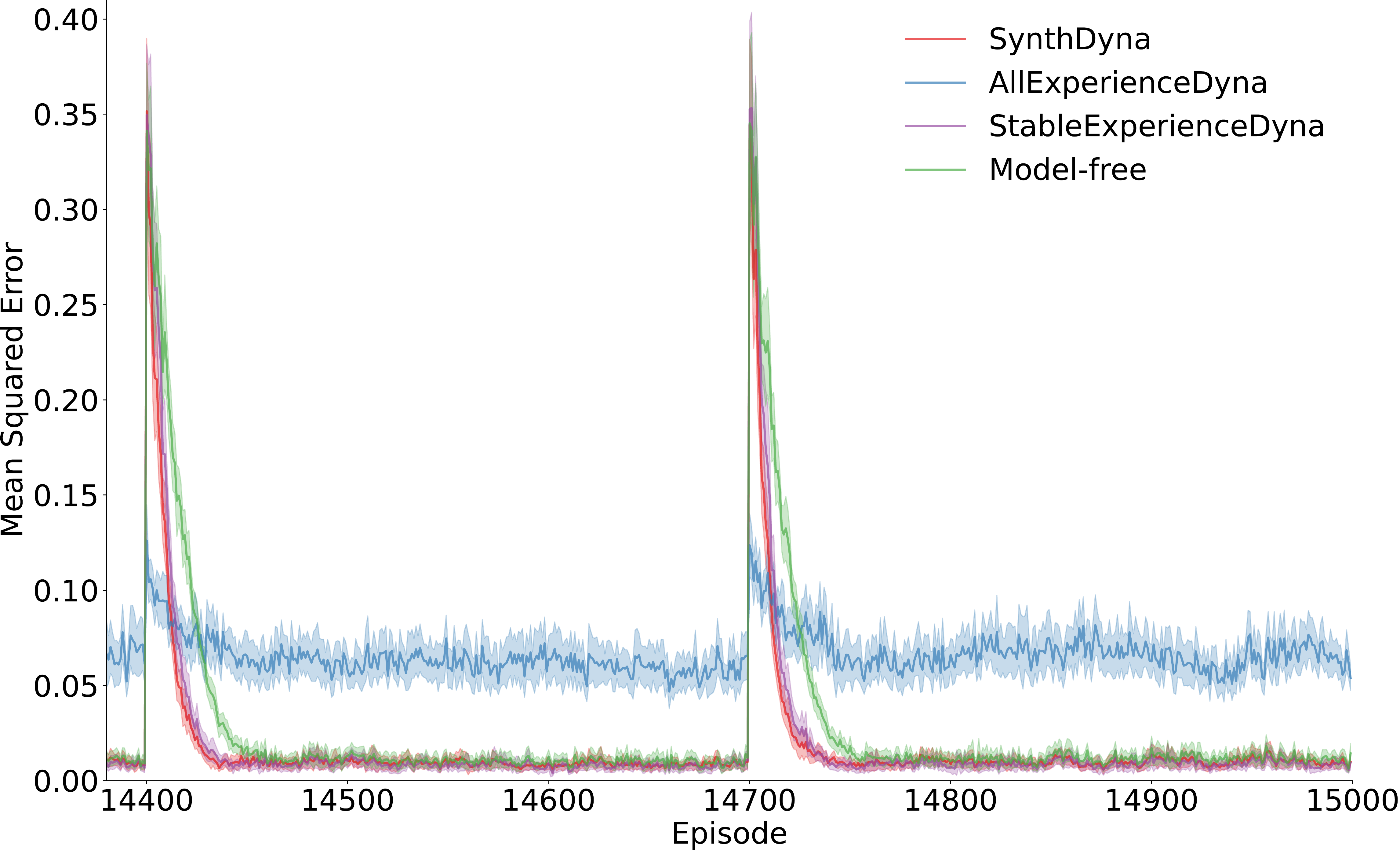}
		\caption{\footnotesize{Mean squared error per episode for each algorithm }}\label{fig:learning_curve}
	\end{subfigure}
        \hfill
	\begin{subfigure}[b]{0.49\textwidth}
		\centering
		\includegraphics[width=\textwidth]{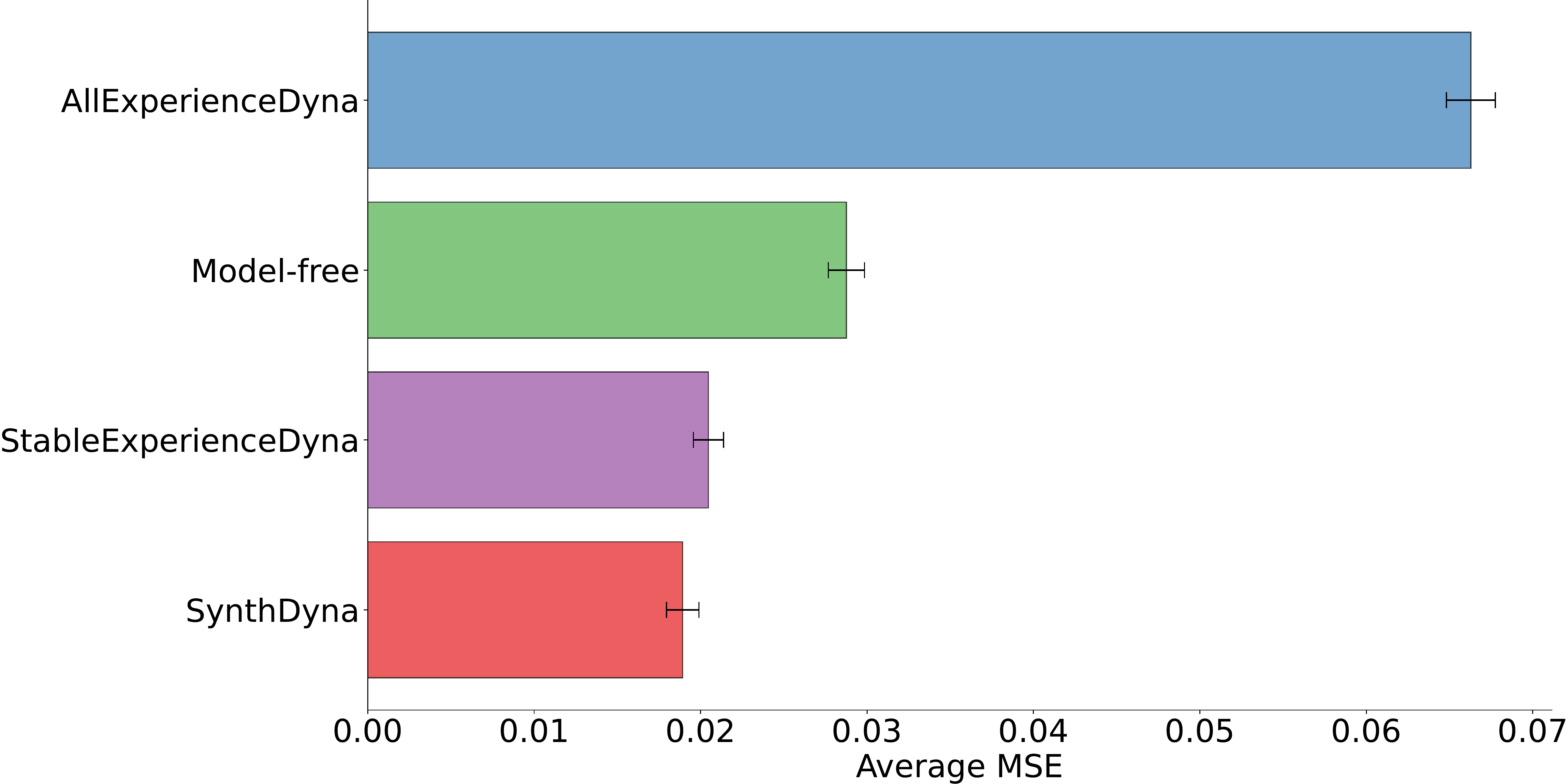}
		\caption{\footnotesize{Average MSE per algorithm}}\label{fig:bar_graph}
	\end{subfigure}
    \caption{We ran each algorithm in the non-stationary windy hallway environment for 15,000 episodes, with 30 trials. Fig. \ref{fig:learning_curve}, shows the mean squared value error per episode for the last 600 episodes.  Fig. \ref{fig:bar_graph}, shows the average MSE over the last 600 episodes. Shaded regions and error bars show 95\% confidence intervals.}
    \label{fig:results}
\end{figure*}

We evaluate SynthDyna against our three baselines in the the non-stationary hallway environment described previously. Every algorithm's hyper-parameters are selected after a comprehensive grid search. The primary metric we focus on is the mean squared value error (MSE) per episode, which measures the average squared error of the predicted return compared to the expected return over the timesteps in an episode. Let $\thetabf$ be the value function’s parameters before an update based on a veridical transition,  ($\phibf, r, \phibf'$), (i.e., line 9 of Algorithm \ref{algo:synthdyna}).  For an episode of length $n$, starting at timestep $e$, the mean squared value errors per episode is computed with: 
$\mbox{MSE} = \frac{1}{n}\sum_{t=e}^{e+n}{(v(\hbf_{t}) - \phibf_{t}^{\top}\thetabf_{t})^2}$.

Figure \ref{fig:learning_curve} shows the MSE per episode for the last two reward-regime switches. Figure \ref{fig:bar_graph} highlights the average MSEs for each algorithm for the last 600 episodes. AllExperienceDyna has the poorest performance across baselines due to its planning process that updates the value function with old experience that could be in direct opposition to the current reward regime. In an environment like this one, the much simpler Model-free algorithm performs substantially better as each update to the value function is exclusively done with the present moment’s veridical experience. StableExperienceDyna is able to surpass the performance of AllExperienceDyna and Model-free demonstrating the value of model-based RL for fast adaptation in this domain.  
It can reap the benefits of planning by updating based on past transition tuples that are stable across switching reward regimes. SynthDyna outperforms even StableExperienceDyna with a statistically significant difference between the average MSEs over the last 600 episodes (two sample t-test, $p < 0.05$). This shows that a model like SynthDyna’s can be built in a reward switching non-stationary environment to produce completely synthetic transition tuples and yet update the value function more effectively than if a stable and accurate model of the world was used.








\section{Conclusion}


``All models are wrong but some are useful''~\cite{box1979robustness}. In this work, we questioned whether models in MBRL should aim to accurately simulate environment dynamics.  We posited an alternative approach that models should aim to be useful to the learning system, aiding in faster adaptation.  As a proof of concept, we introduced SynthDyna, which uses meta-learning to train a model for Dyna-style planning such that it directly reduces prediction error.  In our experiments, SynthDyna outperforms model-based baselines focused on accuracy.  Focusing on model usefulness and the approach taken by SynthDyna are promising directions for MBRL to realize its potential for general, sample efficient RL.

\bibliography{ref}
\bibliographystyle{abbrv}
\end{document}